\newcommand{\keywords}[1]{\par\addvspace\baselineskip\noindent\keywordname\enspace\ignorespaces#1}
\title{A Framework for Constrained and Adaptive Behavior-Based Agents}
\author{Renato de Pontes Pereira, Paulo Martins Engel}
\institute{Informatics Institute \\
Federal University of Rio Grande do Sul, UFRGS\\
Porto Alegre, RS, Brazil, CEP 91501-970\\
\path|{rppereira, engel}@inf.ufrgs.br|}
\begin{document}

\mainmatter  
\toctitle{Lecture Notes in Computer Science}
\tocauthor{Authors' Instructions}
\maketitle

\begin{abstract}

Behavior Trees are commonly used to model agents for robotics and games, where constrained behaviors must be designed by human experts in order to guarantee that these agents will execute a specific chain of actions given a specific set of perceptions. In such application areas, learning is a desirable feature to provide agents with the ability to adapt and improve interactions with humans and environment, but often discarded due to its unreliability. In this paper, we propose a framework that uses Reinforcement Learning nodes as part of Behavior Trees to address the problem of adding learning capabilities in constrained agents. We show how this framework relates to Options in Hierarchical Reinforcement Learning, ensuring convergence of nested learning nodes, and we empirically show that the learning nodes do not affect the execution of other nodes in the tree.

\keywords{Behavior Trees, Reinforcement Learning, Hierarchical Reinforcement Learning, Agent Modeling, Robotics, Games}
\end{abstract}

\section{Introduction}
\label{sec:introduction}

Some applications require a strict control of autonomous agents, i.e., the agent must be reliable to perform the right action in the right moment to achieve its goals, with minimum (ideally zero) chance to fail. 
For example, consider a large robot that interacts with humans in an industry. This robot must operate very carefully in order not to damage other equipments neither harm humans. 

A Behavior Tree (BT) is a plan representation and decision making tool for modeling autonomous agents mainly used in the video games industry, but gaining attention in robotics, control theory and general agent modeling. BTs are appealing because they allow human experts to design \emph{constrained behaviors}. We define a constrained behavior as a control block that guarantees the execution of a specific chain of actions given a specific set of perceptions. 

Behavior Trees provide several powerful features to model constrained agents, such as: reliability that the agent will perform exactly the behaviors designed by the expert; flexibility to reuse behaviors, and build and maintain large models without losing readability; and an easiness to debug and identify possible errors. 
However, Behavior Trees also have limitations, for example: 1) in general, a good agent must have a large set of behaviors in order to respond differently to the events in the environment, but a BT model depends entirely on the expert manual work, therefore, creating good agents demands much time and effort from the expert to design and test this large set of behaviors; 2) the lack of variation in behaviors (due to the cost in terms of time or effort to make more behaviors) makes the agent predictive and repetitive, which can be a negative factor in applications where the agent interacts with humans; 3) the Behavior Tree does not have any adaptation mechanism, therefore, agents cannot adapt to changes in the environment neither improve the initial expert design, this can also be an issue when agents interact with humans. 

One way to bring adaptiveness and avoid repetitiveness is using learning algorithms. Reinforcement Learning (RL), in particular, can be used to model learning agents that can adapt to unknown environments and optimize their performance in real time and in an online way. However, learning is often discarded in applications that require constrained agents because it can bring harmful problems to the agent and to the task, some problems include: it may not guarantee convergence or stability, making the agent unreliable; it may require too much data or time to be trained; large robots, for example, may bring danger to equipments or living beings; the robot itself can be damaged; characters controlled by computer in competitive games, for example, may learn bad actions from humans; it may not generalize, performing poorly in other environments than those presented on training; may converge to sub-optimal behaviors and never change again.

In this paper we address the problem of adding learning capabilities to constrained agents by using Reinforcement Learning together with Behavior Trees, keeping the advantages and features of BTs and RL while minimizing the risks of learning. To achieve this, we define a new node in the Behavior Tree, the \emph{learning node}, which embeds a local Reinforcement Learning model without changing the overall tree structure. Based on this node, we propose a general framework for modeling constrained yet adaptive agents that is related to the Options framework in Hierarchical Reinforcement Learning. 

The remainder of this paper is structured as follows. Section \ref{sec:bt} presents an overview of Behavior Trees, together with the formal definition of this tool. Section \ref{sec:rl} describes the basics of the Reinforcement Learning through Markov Decision Processes and expands it to Hierarchical Reinforcement Learning and Semi-Markov Decision Processes. Section \ref{sec:learningnodes} presents the basis of the proposed framework and a proof that it is related to the Options framework of Hierarchical Reinforcement Learning. The proposed framework is also validated empirically in Section \ref{sec:experiments} with two agent simulation experiments using a fire control scenario. Section \ref{sec:related} discusses the relation of this framework with other related works on Behavior Trees and Reinforcement Learning. Finally, Section \ref{sec:conclusion} presents the final notes and discusses the future works.

\section{Behavior Trees}
\label{sec:bt}

A Behavior Tree (BT) is a plan representation and decision making tool used to model and control autonomous agents. This tool was created in the game industry \cite{isla2005} with fast adoption by the game development community. BTs are commonly used to model NPCs (Non-Playable Characters or Characters Controlled by the Computer) and they are viewed as an alternative to FSMs (Finite State Machines), HFSMs (Hierarchical Finite State Machines) and hand-coded rules via scripting. Since 2012, there are some efforts to apply BTs on robotics, including modeling and controlling of UAVs \cite{ogren2012}, fault tolerance in hybrid and multi-robot systems \cite{colledanchise2014b}\cite{colledanchise2015} and an attempt to unify notation and formalize Behavior Trees as controller, proving the relation to CHDSs (Controlled Hybrid Dynamical Systems) \cite{marzinotto2014}. We follow the description proposed by \nocite{marzinotto2014}(Marzinotto et al., 2014), which is the most formal and coherent description of Behavior Trees in the current literature, but with a few minor changes (please refer to Section \ref{sec:related} for a comparison).

Behavior Trees provide several features that are important for real-time, constrained and complex application:

\begin{itemize}
  \item \textbf{readability}: all information, transitions, connections, relations and conditions modeled by BTs are explicit and compact. This is especially useful for maintenance and collaborative work;

  \item \textbf{maintainability}: because transitions in BT are defined by the structure in an explicit way (the model is not black box), nodes can be designed independently from one another, thus, when adding, modifying or removing nodes (or subtrees) it is not necessary to change other pieces of the model.

  \item \textbf{scalability}: when a BT has many nodes, it can be decomposed into small subtrees saving the readability of the graphical model.

  \item \textbf{reusability}: due to the independence provided by BT, the nodes and subtrees can be reused in other models and projects.

  \item \textbf{goal-oriented}: although the nodes are independent from one another, they still are related by the structure of the model. This allows designers to build specific sub-trees for a given goal without losing flexibility of the model.

  \item \textbf{parallelization}: parallelization is possible (and easy to do) because all worker processes are locally contained to the parallel node.
\end{itemize}

We define a Behavior Tree as a Directed Rooted Tree $\mathcal{G}(\mathcal{V}, \mathcal{E})$ with $|\mathcal{V}|$ nodes and $|\mathcal{E}|$ directed edges. For a pair of connected nodes, the outgoing node is called \emph{parent} and the incoming node is called \emph{child}. The child-less nodes are called \emph{leaves} while the parent-less node is called \emph{root}, which must have only a single child. The nodes standing between the root and the leaves are called \emph{internal}. Each subtree in the model defines a different \emph{behavior}. A single leaf node is called a \emph{primitive behavior} (also called \emph{trivial} or \emph{atomic} behavior). \emph{Composed behaviors} use combinations of \emph{primitive behaviors} and other composed ones, defining a \emph{behavior hierarchy}. 

Periodically, the \emph{root} generates a \emph{tick} signal and propagate it through the tree branches according to the algorithm defined by each node type. When the \emph{tick} reaches a leaf node, a computation is made and the node returns a \emph{state} value: \emph{SUCCESS}, \emph{FAILURE} or \emph{RUNNING}\footnote{We also use a special \emph{state} called \emph{ERROR}, which is always propagated immediately back to the root (similar to \emph{RUNNING}). This \emph{state} is not always described as part of BTs core, but sometimes is useful for debugging.}. Then the returned \emph{state} value is propagated back and forth through the tree according to the algorithm defined by each node type. The \emph{tick} signal stops when it reaches the \emph{root}.

Each node in the tree, except the \emph{root}, belongs to one of the following categories: \emph{Composite}, \emph{Decorator}, \emph{Action} or \emph{Condition}. Figure \ref{fig:node_categories} presents the visual example of them. Nodes of each one of these categories have a specific responsibility and constraints in the tree:

\begin{itemize}
  \item \textbf{Composite}: nodes of this category are commonly referred to as control-flow nodes, because their role is to propagate the tick signal to its children, respecting some defined order. Composite nodes must decide which child will be ticked and which \emph{state} value will be returned. A composite node can have one or more children and preferably does not perform any computation more than the necessary to choose the children to tick. Composites are represented graphically by squares with a symbol or rectangles with text.

  \item \textbf{Decorator}: the goal of decorator is to change the behavior of its child by manipulating the returned \emph{state} value or changing the child ticking frequency. For example, the decorator may invert the result state of its child, similarly to the NOT operator, or it can repeat the execution of the child for a given number of times. Decorators have only a single child and are represented graphically by rhombuses.

  \item \textbf{Action}: action nodes are obligatorily leaves, they do not propagate the \emph{tick} signal. Instead, they perform some computation to change the environment or the internal state of the system, and return a state value. Actions of a robot may involve sending signals to the engines, playing sound through speakers or turning on lights, while the actions of a NPC may involve executing animations, performing spatial transformations, playing sounds, etc. Actions are represented graphically by rectangles.

  \item \textbf{Condition}: like action nodes, conditions are obligatorily leaves and perform some computation instead of propagating the \emph{tick} signal. The difference is that, a condition node does not change the environment or any internal variable in the system, it only checks whether a certain condition has been met or not. To accomplish that, the node commonly has a target variable and a criterion to base the decision. Examples of condition nodes: ``is obstacle close?'', ``is other agent visible?'', ``is battery low?'' or ``am I hungry?''. Conditions are represented graphically by ellipses.
\end{itemize}

\begin{figure}
  \centering
  \includegraphics[width=\textwidth]{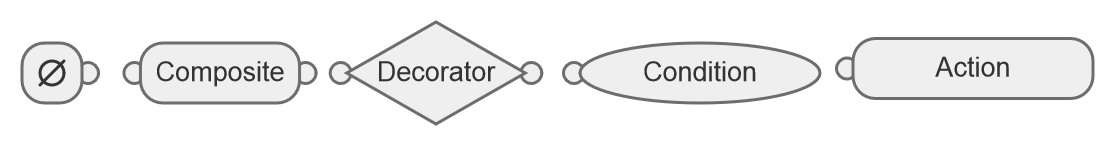}
  \caption{The visual representation of each node category (and the Root). From left to right: Root, Composite, Decorator, Condition and Action}
  \label{fig:node_categories}
\end{figure}

An action returns \emph{SUCCESS} if it could be completed; it returns \emph{FAILURE} if, for any reason, it could not be finished; or returns \emph{RUNNING} if the action is still executing. A condition node returns \emph{SUCCESS} if the condition has been met, otherwise it returns \emph{FAILURE}. It never return \emph{RUNNING}. Composite and Decorators handle \emph{SUCCESS} and \emph{FAILURE} differently, but, in general, they return \emph{RUNNING} as soon as they receive it.

There are no common core decorator, action or condition node type, they depend on the task. However, there are five core composite node types: \emph{Sequence}, \emph{Priority}, \emph{MemSequence}, \emph{MemPriority} and \emph{Parallel}. Figure \ref{fig:composite_types} shows the visual of these nodes. There can be other types depending on the application of the model, but these are generic and necessary nodes to build a complete controller \cite{marzinotto2014}. Each node type works in a specific way:

\begin{itemize}
  \item \textbf{Sequence}: when a Sequence node is \emph{ticked}, it propagates the \emph{tick} signal to its children sequentially. If any child returns \emph{FAILURE} or \emph{RUNNING}, the Sequence stops the propagation and returns the received \emph{state}. However, if all Sequence children return \emph{SUCCESS}, the Sequence also returns \emph{SUCCESS}. This node is represented graphically by an arrow $\rightarrow$.

  \item \textbf{Priority}: a Priority node (sometimes called Selector) also propagates the \emph{tick} signal to its children sequentially when the node is \emph{ticked}. If any child returns \emph{SUCCESS} or \emph{RUNNING}, the Priority stops the propagation and returns the received \emph{state}. If all children return \emph{FAILURE}, the Priority also returns \emph{FAILURE}. This node is represented graphically by a question marker $?$.

  \item \textbf{MemSequence}: the MemSequence is a version of Sequence with memory. It works in a similar way to Sequence, but when a child $i$ returns \emph{RUNNING}, the MemSequence node will not propagate the next \emph{tick} to nodes preceding $i$ (i.e., it will only \emph{tick} $i$, $i+1$, $i+M$). This node is represented graphically by an arrow $\rightarrow$ together with an asterisk $*$.

  \item \textbf{MemPriority}: MemPriority is the memory version of Priority. It does the same as MemSequence does for Sequence. This node is represented graphically by a question marker $?$ together with an asterisk $*$.

  \item \textbf{Parallel}: when a Parallel node is \emph{ticked}, it propagates the \emph{tick} to all its children at the same time. Then it returns \emph{SUCCESS} if $S$ children or more return \emph{SUCCESS}; it returns \emph{FAILURE} if $F$ or more return \emph{FAILURE}. Otherwise, it returns \emph{RUNNING}. This node is represented graphically by a double arrow $\rightrightarrows$.
\end{itemize}

\begin{figure}
  \centering
  \includegraphics[width=0.65\textwidth]{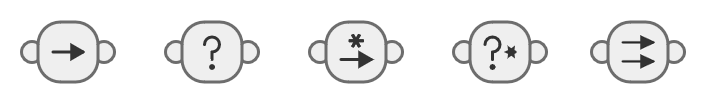}
  \caption{The visual representation of the core node types. From left to right: Sequence, Priority, MemSequence, MemPriority, Parallel}
  \label{fig:composite_types}
\end{figure}


Notice that, with the exception of Parallel, all composite nodes propagate the \emph{tick} sequentially, creating the notion of priority among behaviors in the tree. There are two common ways to represent a Behavior Tree in a 2D layout: horizontally and vertically. A composite node \emph{ticks} its children from left to right in a vertical view, or from top to bottom in a horizontal view. Expanding this idea, we can say that the left/top branch of the tree (starting from the root) is the behavior with the highest priority, while the right/bottom branch of the tree has the lowest priority. This characteristic is essential for a controller, e.g., avoiding collisions or falls is more important than exploring the environment. Please refer to the figures at Section \ref{sec:experiments} to real examples of a Behavior Tree model.

\section{Reinforcement Learning}
\label{sec:rl}

We address the problem of adding capacity of learning to an agent that acts without supervision from experts (i.e., input-output pairs are never presented) in real-time applications (i.e., it must learn continuously) and preferably in an online fashion (i.e., update the model upon the arrival of a new piece of information). This problem can be formulated as a Markov Decision Process (MDP) problem, which is the common basis for the Reinforcement Learning task.

In an MDP problem, a learning agent interacts with an environment with discrete time steps $t = 1, 2, 3, ...$. On each time step, the agent observes the system state $s_t \in \mathcal{S}$ and performs and action $a_t \in \mathcal{A}_{s_t}$, where $\mathcal{A}_s$ is a finite and non-empty set of admissible actions for a state $s$. As consequence of the agent action, the system generates a reward $r_{t+1}$ according to an expected immediate reward function $R(s, a) = \mathbb{E}\{r_{t+1}|s_t=s, a_t=a\}$ and a next state $s_{t+1}$ with a transition probability of $P(s'|s, a) = P(s_{t+1}=s'|s_t=s, a_t=a)$.

The goal of the agent is to learn a policy $\pi : \mathcal{S} \times \mathcal{A} \rightarrow [0, 1]$, where $A = \bigcup_{s \in \mathcal{S}}\mathcal{A}_s$ (the set of all actions), that maximizes the expected discounted future reward from each state $s$ \cite{sutton1999}:

\begin{align}
  V^\pi(s) &= \mathbb{E}\{r_{t+1} + \gamma r_{t+2} + \gamma^2 r_{t+3} + \ldots|\pi, s_t=s \} \label{eq:valuefunction} \\
           &= \mathbb{E}\{r_{t+1} + \gamma V^\pi(s_{t+1})|\pi, s_t=s\} \notag \\
           &= \sum_{a \in \mathcal{A}_s} \pi(s, a) \left[ R(s, a) + \gamma\sum_{s'} P(s'|s, a)V^\pi(s') \right], \label{eq:valuebellman}
\end{align}

where $\pi(s, a)$ is the probability of a policy $\pi$ choosing an action $a$ in the state $s$, and $\gamma \in [0, 1]$ is a discount-rate parameter. $V^\pi(s)$ is called the value function for $\pi$, and denotes the value of the state $s$ when following a policy $\pi$. An optimal policy $\pi^*$ is any policy that corresponds to the unique optimal value function $V^*(s)$, defined as:

\begin{align}
  V^*(s) &= \max_\pi V^\pi(s) \label{eq:optimalvaluefunction} \\
         &= \max_{a \in \mathcal{A}_s} \mathbb{E}\{r_{t+1} + \gamma V^\pi(s_{t+1})|s_t=s, a_t=a\} \notag \\
         &= \max_{a \in \mathcal{A}_s} \left[ R(s, a) + \gamma \sum_{s'} P(s'|s, a)V^*(s') \right]. \label{eq:optimalvaluebellman}
\end{align}

Unfortunately, the only solutions to Equations \ref{eq:valuebellman} and \ref{eq:optimalvaluebellman} are the Equations \ref{eq:valuefunction} and \ref{eq:optimalvaluefunction} \cite{sutton1998}, respectively, which cannot be computed without knowing $R(s, a)$ and $P(s'|s, a)$. Alternatively, we can define a state value function $Q^\pi(s, a)$ that works upon pairs of states and actions, rather than just states. $Q^\pi(s, a)$ denotes the value of taking an action $a$ while in a state $s$ under a policy $\pi$, therefore:

\begin{align}
  Q^\pi(s, a) &= \mathbb{E}\{r_{t+1} + \gamma r_{t+2} + \gamma^2 r_{t+3} + \ldots|\pi, s_t=s, a_t=a \} \notag \\ 
              &= R(s, a) + \gamma \sum_{s'} P(s'|s, a)V^\pi(s') \notag \\
              &= R(s, a) + \gamma \sum_{s'} P(s'|s, a) \sum_{a'}\pi(s', a')Q^\pi(s', a'), \notag 
\end{align}

and its respectively optimal value $Q^*(s, a)$:

\begin{align}
  Q^*(s, a) &= \max_\pi Q^\pi(s, a) \notag \\
            &= R(s, a) + \gamma \sum_{s'} P(s'|s, a) \max_{a'} Q^*(s', a'). \notag
\end{align}

Supposing that at each state $s$, the agent performs an action $a$ and receives a reward $r$, and then observes the new state $s'$. We can approximate $Q^*(s, a)$ by updating an estimate $Q_k(s, a)$:

\begin{equation}
  Q_{k+1}(s, a) = (1 - \alpha_k) Q_k(s, a) + \alpha_k \left[ r + \gamma \max_{a' \in \mathcal{A}_{s'}} Q_k(s', a') \right],
\end{equation}

where $\alpha_k$ is a time-varying learning-rate parameter. This process is called Q-learning, which is a classical and widely used Reinforcement Learning algorithm. With this update process, ${Q_k}$ converges to $Q^*$ with probability 1 if at the limit, all admissible state-action pairs are updated infinitely often, and $\alpha_k$ decays with time, regardless of the policy being followed \cite{sutton1998}\cite{barto2003}.

\subsection{Hierarchical Reinforcement Learning}

While most classical reinforcement learning relies on the configuration described by MDPs, Hierarchical Reinforcement Learning (HRL) models have their basis on Semi Markov Decision Processes (SMDPs) \cite{barto2003}. Due to the generalization of SMDPs, these hierarchical models are able to explore the temporal aspects of the tasks and reduce the impact of the curse of dimensionality by splitting the task space into several subproblems.

In a SMDP, the learning agent considers the time during transition between states. This means that, when observing a state $s$, an agent performs an action $a$ that takes $\tau$ time steps\footnote{Here, we assume $\tau$ to be discrete, but it can be extended to continuous without much impact \cite{barto2003}} to  move to a new state $s'$. Now, the joint probability is rewritten to $P(s', \tau|s, a)$ and the expected immediate reward function $R(s, a)$ now gives the amount of discounted reward expected to accumulate over the waiting time in $s$ given $a$ \cite{barto2003}. Equations for the optimal value function $V^*$ and the optimal state value function $Q^*$ are also rewritten to:

\begin{equation}
  V^*(s) = \max_{a \in \mathcal{A}_s} \left[ R(s, a) + \sum_{s', \tau} \gamma^\tau P(s', \tau|s, a)V^*(s') \right],
\end{equation}

and

\begin{equation}
  Q^*(s, a) = R(s, a) + \sum_{s', \tau} \gamma^\tau P(s', \tau|s, a) \max_{a'} Q^*(s', a'), 
\end{equation}

respectively.

One successful approach based on SMDPs is the \emph{Option} framework proposed in \nocite{sutton1999} (Sutton et al., 1999). An option is a generalization of an action in a way that it can call other options upon the execution, creating the idea of a hierarchy. When an option is initiated, it can call another option, then this new option can call another one, and so on until it finds a \emph{primitive option} (the actions of the MDP framework). An option can be defined as 3-tuple $\langle \mathcal{I}, \mu, \beta \rangle$, consisting of an input set $\mathcal{I} \subseteq \mathcal{S}$, a semi-Markov policy $\mu : \mathcal{S} \times \mathcal{O} \rightarrow [0, 1]$ (where $\mathcal{O} = \bigcup_{s \in \mathcal{S}} \mathcal{O}_s$), and a termination condition $\beta : \mathcal{S} \rightarrow [0, 1]$. A given option can only be initiated if, and only if, the current state $s$ is an element of $\mathcal{I}$. While executing, it chooses the next option $o$ with probability $\mu(s, o)$, the environment changes to state $s'$, where the option terminates with probability $\beta(s')$. Notice that $\mu$ is a semi-Markov policy over policies, i.e., it can choose the next option based on the entire history $h$ of states, actions, and rewards since the option was initiated \cite{barto2003}.

With this approach, the option-value function for $\mu$ is:

\begin{equation}
  Q^\mu(s, o) = \mathbb{E}\{r_{t+1} + \gamma r_{t+2} + \ldots + \gamma^{\tau-1} r_{t+\tau} + \ldots| \mathcal{E}(o\mu, s, t) \},
\end{equation}

where $\mathcal{E}(o\mu, s, o)$ is the event of $\mu$ being initiated at time $t$ in $s$, and $o\mu$ is the semi-Markov policy that follows $o$ until it terminates after $\tau$ time steps and then continues according to $\mu$. Respectively, the optimal function are rewritten to:

\begin{equation}
  Q^*_\mathcal{O}(s, o) = R(s, o) + \sum_{s'} P(s'|s, o) \max_{o' \in \mathcal{O}_{s'}} Q^*_\mathcal{O}(s', o'), 
\end{equation}

where:

\begin{equation}
  P(s'|s, o) = \sum^{\infty}_{\tau=1} p(s', \tau)\gamma^\tau
\end{equation}

for all $s \in \mathcal{S}$, where $p(s', \tau)$ is the probability that $o$ terminates in $s'$ after $\tau$ steps when initiated in state $s$. The corresponding Q-learning update is:

\begin{equation}
  Q_{k+1}(s, o) = (1 - \alpha_k) Q_k(s, o) + \alpha_k \left[ r + \gamma^\tau \max_{o' \in \mathcal{O}_{s'}} Q_k(s', o') \right],
\end{equation}

\section{Learning Framework}
\label{sec:learningnodes}

We propose a framework for modeling agents that follow behaviors strictly as modeled by a human expert, and still are able to learn from experience. This framework allows an expert to manually define which behaviors an agent will have, to specify when and in which order the behaviors will be executed, and to specify where learning can be applied. Behavior Trees are used as the base modeling tool for our framework, due to their advantages such as described in Section \ref{sec:bt}. In summary, BTs are compact, easy to understand, maintain and reuse, they scale well, have expressive power and can be easily parallelized. We also adopt Reinforcement Learning in order to support our modeling tool, providing the capacity to learn in real time with the agent experiences. With RL, agents can optimize actions in specific situations and be able to adapt to changes on the environment configuration and dynamics.

In order to make Reinforcement Learning work together with Behavior Trees we propose the use of \emph{Learning Nodes}. Within this approach, Reinforcement Learning can be embedded into Behavior Trees in a modular and reusable way. We propose two uses of these custom nodes: as action and as composite.

In the Learning Action Node, the expert must choose how to represent the state $s$, the actions $A$, and the reward $r$, according to the task. Suppose, for example, a robot with an action node to ``grab an object''. This node can use RL in order to learn how to grab different kinds of objects or improve how to grab a known object in different positions. In this case, the state could be the position of the object relative to the robot's hand, the actions could be the different joint configurations in the robot's arm, and the reward function would return a positive value if the robot could grab the object, otherwise it would return a negative value.

In the Learning Composite Node, the expert must also choose how to represent the state $s$, but the actions $A$ are the children of the composite node. So, given a state $s$, the Learning Composite Node selects among the $N$ children $c_1, c_2, \ldots, c_N$. The reward function could be provided according to the task or it could use the state value returned from children ticks (i.e., positive reward for \emph{SUCCESS} and negative reward for \emph{FAILURE}). 

As an application example of a Learning Composite Node, consider an agent in a life simulation. The agent has 5 major behaviors (those which are connected directly to the root's unique child): ``find food'', ``eat'', ``rest'', ``hide''and ``run from predators''. These behaviors can be simple action nodes or  complex subtrees with several other nodes. For this example, the goal of this agent is to learn when to use these behaviors in order to maximize the chance to survive. In this case, these behaviors would be children of a Learning Composite Node, thus being the actions $A$. The state $s$ could be a series of variables, such as the presence of predator, distance from food, hungry level, health level, etc. The reward function of this example could the a combination of one or more variables, such as the hungry level, fatigue, health level, etc.

To make a formal definition of these two nodes, we exploit the Options approach for Hierarchical Reinforcement Learning. Reminding that an option $\langle \mathcal{I}, \mu, \beta \rangle$ is only initiated if the state $s \in \mathcal{I}$, and while executing the option uses the policy $\mu$ to decide which option $o$ (action) will be executed, based on the history $h$ of past states, actions and rewards since the beginning of execution of that option. After that, the environment generates a new state $s'$ with probability $P(s'|s, o)$. We argue that, the Behavior Tree defined here can be modeled as a specialization of an Option-based Hierarchical Reinforcement Learning.

\begin{theorem}
\label{theorem:options}
A Behavior Tree with core nodes is a specialization of Options in Hierarchical Reinforcement Learning.
\end{theorem}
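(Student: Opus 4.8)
The plan is to construct, for every node $v$ of the Behavior Tree $\mathcal{G}(\mathcal{V},\mathcal{E})$, an option $o_v = \langle \mathcal{I}_v, \mu_v, \beta_v \rangle$, and to show that a tick of the root induces precisely the execution of the option attached to the root's single child, with composite nodes acting as options that invoke sub-options and leaves acting as primitive options. First I would fix the state space: let $\mathcal{S}$ be the environment state augmented with the minimal bookkeeping a BT requires, namely, for each MemSequence and MemPriority node, the index of the child that returned \emph{RUNNING} on the previous tick. I would then observe that this augmentation is already subsumed by the Options formalism, since a semi-Markov policy $\mu$ may depend on the whole history $h$ of states, actions and rewards since the option was initiated, and the ``memory'' of the Mem-composites is exactly such a function of $h$.

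Next I would read off the three components of $o_v$ from the node semantics of Section~\ref{sec:bt}. For $\mathcal{I}_v$ take the set of states in which $v$ can legally receive a tick; this is all of $\mathcal{S}$ for the root's child and, inductively, the set of states in which the parent of $v$ forwards a tick to $v$, so that the constraint $s\in\mathcal{I}$ of Options becomes a structural property of the tree rather than an added hypothesis. For a leaf (Action or Condition) let $o_v$ be a primitive option whose policy performs the single environment action of the node, with Conditions as the degenerate case that leaves the environment unchanged and terminates in one step, so $\beta_v\equiv 1$. For a composite $v$ with children $c_1,\dots,c_N$ define $\mu_v$ to be the deterministic policy dictated by the node type: for a Sequence, start by invoking $o_{c_1}$ and, after $o_{c_i}$ terminates, continue with $o_{c_{i+1}}$ if its return state was \emph{SUCCESS} and otherwise stop, with the symmetric rule for Priority, and with the Mem-variants resuming at the recorded child; set $\beta_v(s')=1$ exactly in the states where the propagation rule of Section~\ref{sec:bt} makes the composite return a value to its parent (a child returned \emph{FAILURE} or \emph{RUNNING} for Sequence, \emph{SUCCESS} or \emph{RUNNING} for Priority, the thresholds $S$ or $F$ were met for Parallel, all children returned \emph{SUCCESS}, and so on) and $\beta_v(s')=0$ otherwise. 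An induction on tree depth then establishes that one complete tick-and-return cycle of $v$ coincides with the event $\mathcal{E}(o_v\mu,s,t)$ of running $o_v$ to termination, and that \emph{RUNNING} propagation corresponds to an option that has not terminated and will be resumed on the next tick --- which is well defined precisely because the Mem-composite bookkeeping, equivalently the history $h$, records where to resume.

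The final step is to justify the word \emph{specialization}: each $\mu_v$ above is deterministic and depends on $h$ only through the coarse statistic prescribed by the node type (the index of the next child, the set of children already succeeded or failed), each $\mathcal{I}_v$ is fixed structurally rather than chosen freely, and since $\mathcal{G}$ is a rooted tree each option is invoked by exactly one parent, so the induced option hierarchy is itself a tree; hence BTs with core nodes embed into Options as a proper sub-class. I expect the Parallel node to be the main obstacle, since it ticks all children within a single epoch instead of invoking one sub-option at a time: I would treat it by letting $o_v$ invoke the joint option $o_{c_1}\times\cdots\times o_{c_N}$ over the product action space, with $\beta_v$ reading the $S$ and $F$ thresholds off the vector of child states, and then checking that the SMDP composition identity $P(s'|s,o)=\sum_{\tau\ge 1}p(s',\tau)\gamma^\tau$ used in the definition of $Q^*_\mathcal{O}$ still holds for it. A secondary subtlety is that Condition leaves and immediately-returning composites must be assigned a sojourn time $\tau$ consistently (zero versus one tick) so that the discounting in the option-value recursion agrees with the per-tick accounting of the tree.
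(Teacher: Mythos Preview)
Your proposal is correct and follows the same overall strategy as the paper: identify each BT node with an option $\langle \mathcal{I}, \mu, \beta \rangle$, treat leaves as primitive options and composites as options whose fixed policy invokes sub-options, and appeal to the semi-Markov dependence on history $h$ to accommodate the Mem-variants. The paper's argument is considerably terser and less careful: it takes $\mathcal{I}=\mathcal{S}$ uniformly for every node (rather than your inductive restriction to states in which the parent forwards a tick), simply asserts that $\beta$ and $\mu$ are determined by the node type without spelling them out, and does not single out the Parallel node or the sojourn-time consistency issue at all. Your version buys genuine rigor---the induction on depth, the explicit identification of \emph{RUNNING} with a non-terminated option to be resumed, and the product-option construction for Parallel all address points the paper leaves implicit---while the paper's blanket choice $\mathcal{I}=\mathcal{S}$ is arguably cleaner and is what it actually uses in the subsequent Definitions, on the grounds that a BT node performs no admissibility check before receiving a tick.
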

 
\begin{proof}
Following the definition presented in Section \ref{sec:rl}, an option has the following characteristics: 1) it is a hierarchical combination of other options or it is a primitive option (action of MDP); 2) it has an input set $\mathcal{I}$, a termination condition $\beta$ and a policy $\mu$; 3) the policy $\mu$ can choose other options based on the history $h$ of past states, actions and rewards since the beginning of execution of that option. In Section \ref{sec:bt}, we defined a hierarchical behavior exactly as an option. All nodes in a BT have an input set $\mathcal{I} = \mathcal{S}$ because they are not evaluated before the execution, thus the input set is equal to the whole state set. All nodes also have a termination condition $\beta$ defined by each node type (e.g., the termination condition of a Priority node is: one child returning \emph{SUCCESS} or all children returning \emph{FAILURE}). Composite and decorator nodes have a fixed policy $\mu$ that always calls children sequentially, while actions and conditions are related to primitive options (does not have a policy, instead they perform some interaction with the environment). The core composite nodes depend on the temporal aspect of SMDPs, because their selection of a child $c_i$ depends the execution of all children $c_0, \ldots, c_{i-2}, c_{i-1}$.
\end{proof}

Given this theorem, we can define:

\begin{definition}
A Learning Composite Node can be seen as an option with $\mathcal{I} = \mathcal{S}$, with children $c_1, c_2, \ldots, c_N$ as possible actions, a termination condition $\beta = 1$ if $Tick(c_i) \in \{SUCCESS, FAILURE\}$, and a policy $\mu$ to be learned.
\end{definition}

and

\begin{definition}
A Learning Action Node can be seen as an option with $\mathcal{I} = \mathcal{S}$, with actions $a \in A_s$, a termination condition $\beta$, and a policy $\mu$ to be learned.
\end{definition}

As result from Theorem \ref{theorem:options}, we can also exploit the features provided from the Options framework \cite{sutton1998}, such as: both learning nodes can be trained using Q-Learning; guarantee of convergence for nested nodes with the same conditions to a single Q-Learning model; due to the division of the space, the nodes can converge faster than a single learning model; nodes can be interrupted by prioritized behaviors without problem; intra-option learning can be used to speed up global convergence among the tree.

\section{Experimental Validation}
\label{sec:experiments}

In this Section, we present two simulated fire control scenarios to validate empirically the proposed framework. The following experiments use discrete state and action Q-Learning in the Learning Nodes. However, continuous versions of Q-Learning could be used. We also used the Behavior3 library and editor \cite{pereira2014} for modeling the Behavior Trees. All experiments and custom code are available online\footnote{All code will be available in http://renatopp.com/research after revision.}.

Both experiments execute 30 trials, each with 400 iterations. At the beginning of each trial the experiment is reset. All charts show the average results of the 30 trials. The environment is divided into infinite rooms. In each room, the agent can perform 3 possible actions: \emph{save victim}, \emph{use extinguisher X} and \emph{change room}. Any given room has 50\% of chance to have a victim in the wreckage; if there is a victim, the agent must save it first. A room also has 50\% of chance to have one of 3 types of fire (types $1, 2$ and $3$, with $1/3$ chance each); if there is a fire, the agent must extinguish it before leaving the room and after saving the victim. There are 3 types of extinguishers (types $A, B$ and $C$), each extinguisher can extinguish only one type of fire, randomly chosen in the beginning of the trial; this map is unknown to the agent. If the room has no victim and no fire, the agent must go to the next room. If, at any moment, the wrong extinguisher is used, the room is lost and the agent must change to the next room.

\subsection{Scenario 1}
\label{sec:experiments_scenario1}

\begin{figure}
  \centering
  \includegraphics[width=0.85\textwidth]{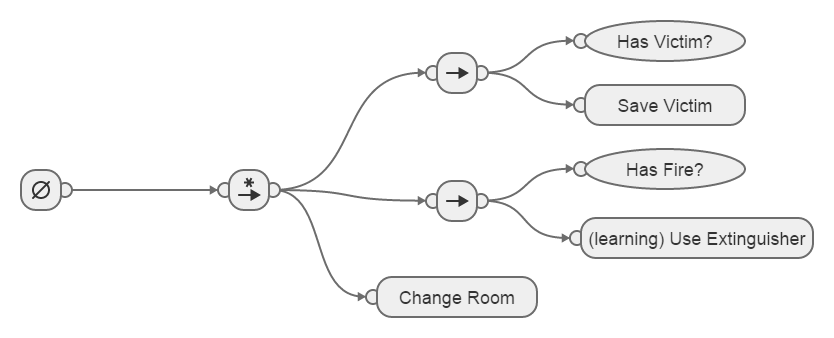}
  \caption{The Behavior Tree model for the first agent. Only a single action node (Use Extinguisher) has learning capabilities.}
  \label{fig:model_scenario1}
\end{figure}

In this scenario, all actions are instantly. Figure \ref{fig:model_scenario1} shows the Behavior Tree that models the learning agent. Notice that, after the root child, there are three branches: the first, with highest priority, represents the behavior \emph{save victim}; the second branch, represents \emph{use extinguisher X}; and the last branch, with lowest priority, represents \emph{change room}.

We use a learning action node for \emph{use extinguisher X} behavior. This node must learn which extinguisher can be used for each fire type; it is configured to receive the state $s = \langle \text{\emph{fire type}} \rangle$, where $\text{\emph{fire type}} = \{1, 2, 3\}$, and with actions $a = \{A, B, C\}$; this node also receives the reward of $+10$ if it could extinguish the fire and $-10$ otherwise.

Table \ref{tab:results_interference} shows the ratio between the correct activations of the three main behaviors over the total expected activations (i.e., the accuracy for behavior usage). All behaviors are called correctly 100\% of the time, this is due to the tree dynamics that allow the expert to model a strict sequence of behaviors. Notice that, this table also shows that the learning node (Use Extinguisher) does not affect the execution of other behaviors in the tree. Figure \ref{fig:results_interference} shows the convergence of the node's accuracy during the experiment, compared with the random baseline.

\begin{figure}
  \centering
  \includegraphics[width=0.85\textwidth]{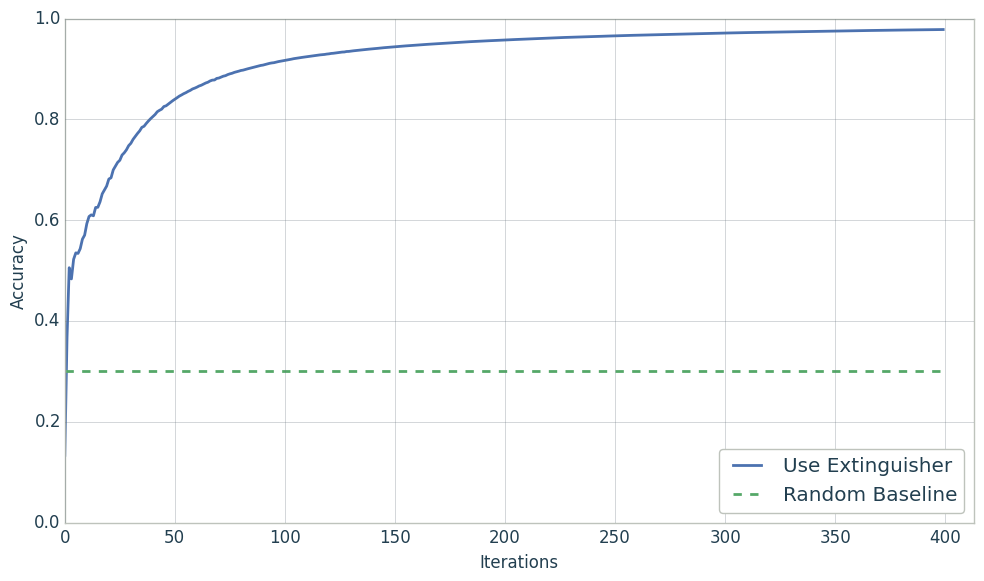}
  \caption{Accuracy of the ``Use Extinguisher'' node (a learning action node) compared with the random baseline.}
  \label{fig:results_interference}
\end{figure}

\begin{table}
  \centering
  \caption{Accuracy of behavior usage.}
  \label{tab:results_interference}
  \begin{tabular}{ l c }
    \toprule
    \textbf{Behavior} & \textbf{Accuracy} \\
    \hline 
    Save Victim       & 1.0 \\
    Use Extinguisher  & 1.0 \\
    Change Room       & 1.0 \\
    \bottomrule
  \end{tabular}
\end{table}

\subsection{Scenario 2}
\label{sec:experiments_scenario2}

In this scenario, we add more complexity to the task. The agent actions \emph{save victim} and \emph{use extinguisher X} now take time to complete, depending on the fire intensity. Any given fire has an intensity $\text{\emph{fire intensity}} \in \{1, 2, 3\}$, chosen randomly for each room. The fire intensity specifies how many ticks the agent needs to perform the actions (i.e., when fire intensity is $0$, all actions are instantly; when it is $1$, actions take 1 tick to be completed; and so on). The fire intensity is reduced by $1$ each tick when the right extinguisher is being used. Notice that \emph{change room} is always instantly and the use of the wrong extinguisher makes the agent lose the room. 

\begin{figure}
  \centering
  \includegraphics[width=0.85\textwidth]{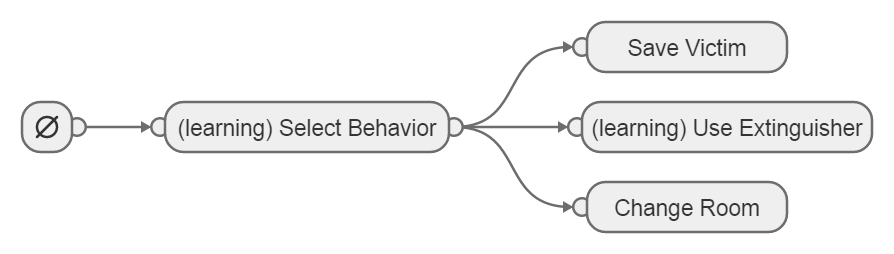}
  \caption{The Behavior Tree model for the second agent using two nested Learning Nodes.}
  \label{fig:model_scenario2}
\end{figure}

Figure \ref{fig:model_scenario2} shows the Behavior Tree that models the learning agent for this scenario. Now, it uses 2 learning nodes. The first, similar to the one used in the first scenario, is a learning action using the state $s = \langle \text{\emph{fire type}} \rangle$, where $\text{\emph{fire type}} = \{1, 2, 3\}$, and actions $a = \{A, B, C\}$; this node receives the reward of $\frac{10}{\text{\emph{fire intensity}}}$ while using the right extinguisher (i.e., $+10$ when the fire is extinguished), and $-10$ if the wrong extinguisher is used.

\begin{table}
  \centering
  \caption{Accuracy of behavior usage.}
  \label{tab:results_nested}
  \begin{tabular}{ l c }
    \toprule
    \textbf{Behavior} & \textbf{Accuracy} \\
    \hline 
    Save Victim       & 0.974 \\
    Use Extinguisher  & 0.991 \\
    Change Room       & 0.991 \\
    \bottomrule
  \end{tabular}
\end{table}

The second learning node is the root child, which must learn which behavior must be executed given the state $s = \langle \text{\emph{has victim?}}, \text{\emph{has fire?}}\rangle$. The node's children are the actions $a = \{\text{\emph{save victim}}, \text{\emph{use extinguisher}}, \text{\emph{change room}}\}$. This node receives the rewards: $-10$ if the node tries to save and there is no victim, $-1$ while saving the victim, and $+10$ when the victim is saved; $-10$ if trying to extinguish a non-existing fire, $-1$ while extinguishing it, and $+10$ when the fire is extinguished; and $+10$ when the agent leaves the room at the right moment and $-10$ otherwise.

\begin{figure}
  \centering
  \includegraphics[width=0.85\textwidth]{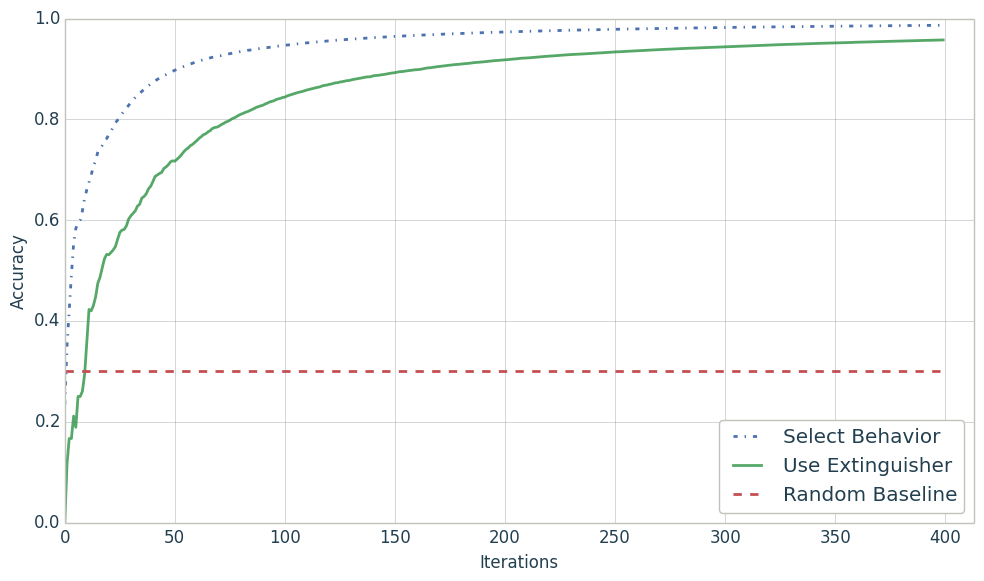}
  \caption{Accuracy of the composite and action Learning Nodes.}
  \label{fig:results_nested}
\end{figure}

Again, Table \ref{tab:results_nested} shows the accuracy of the behavior usage. In this case, behaviors are called correctly 97\% to 99\% of the time, differing from the previous result in scenario 1. This difference is because the learning process needs a step of trial and error to learn the most effective behavior. Figure \ref{fig:results_nested} shows the convergence of both node's accuracy during the experiment compared with the random baseline.

\section{Related Work}
\label{sec:related}

Behavior Trees were created as alternative to Hierarchical Finite State Machines (HFSMs) and similar methods, aiming to provide more flexible controller for Non-Playable Characters (NPCs) in video games \cite{isla2005}. The method had a quick acceptance in the game industry and, recently, it has been applied to robotics \cite{ogren2012} \cite{marzinotto2014} \cite{colledanchise2014a} \cite{colledanchise2014b} \cite{colledanchise2015}, where BTs received a more formal and standard definition.

We base our developments on the work of \nocite{marzinotto2014}(Marzinotto et al., 2014), where the authors prove the relation of Behaviors Trees to Controlled Hybrid Dynamical Systems (CHDSs). Our work has some differences to theirs. Firstly, we define the Behavior Tree as a Directed Rooted Tree (DRT) and not as a Directed Acyclic Graph (DAG) because we consider that multi parenting is only done in the implementation level, not in the modeling level. We also prefer the term \emph{Priority} over \emph{Selector}, due to the similarity to the \emph{Sequence} node. We also consider \emph{MemSequence} and \emph{MemPriority} as core nodes and not extension to core nodes. Different from almost all descriptions of BTs, we represent the model graphically in a horizontal layout due to legibility and space. In our layout, the priority is given from top to bottom.

Naturally, this is not the first time a custom node is proposed to Behavior Trees. For example, \nocite{johansson2012}(Johansson et al., 2012) proposed an emotional node which uses an ``emotion function'' to sort the node's children according to the agents feelings; \nocite{palma2011}(Palma et al., 2011) and \nocite{florez2009}(Florez-Puga et al., 2009) propose a query node that looks for possible subtrees in a Case-Based Reasoning (CBR) systems, resulting in dynamic trees. As far as we known, there is no use of a custom learning node neither the use of Reinforcement Learning with Behaviors Trees in the current literature.

We show that our framework has a close relation to the Options framework \cite{sutton1999}, but it has also similarities to other models of Hierarchical Reinforcement Learning, such as the Hierarchies of Abstract Machines \cite{parr1998} and the MAXQ model \cite{dietterich2000}. As a general case, authors in Hierarchical Reinforcement Learning area see the manual division of behaviors as a problem to be dealt while we use this as an intrinsic part of our approach, i.e., the manual definition of behaviors is viewed as a mean to use prior and expert knowledge of the problem.

\section{Conclusion}
\label{sec:conclusion}

We have proposed a framework to use Reinforcement Learning in behavior-based agents, providing adaptiveness to physical or virtual agents while respecting the constraints modeled by the expert. Based on Behavior Trees, we proposed the creation of a new type of Composite and Action node, called \emph{Learning Node}, in which we embed a Q-Learning algorithm to perform a local learning, without affecting how other nodes work. We show that this framework is related to Hierarchical Reinforcement Learning, being a specialization of the Options framework, thus ensuring convergence of nested learning nodes, allowing them to be interrupted before the task is completed and allowing the use of intra-option learning for more complex models. 

We also validate our framework empirically using experiments in simulated fire control scenarios. The experiments show how to use the expert knowledge to model the behavior choices without interference of the learning nodes, and confirms that nested learning nodes can converge and work with temporal actions.

This framework provides the formalization needed to expand the research on adaptive and constrained behavior-based agents using Behavior Trees and Reinforcement Learning. We expect to further improve this framework by extending it with capabilities for working in non-stationary and continuous state space, allowing us to create agents for more complex environments.

\section*{Acknowledgment}
The authors thank Edigleison Carvalho and Thiago Rodrigues for their valuable input on this paper. This work is supported by CNPq, a Brazilian government entity for scientific and technological development.

\bibliographystyle{splncs03}
\bibliography{arxiv2015}

\begin{thebibliography}{10}
\providecommand{\url}[1]{\texttt{#1}}
\providecommand{\urlprefix}{URL }

\bibitem{barto2003}
Barto, A.G., Mahadevan, S.: Recent advances in hierarchical reinforcement
  learning. Discrete Event Dynamic Systems  13(4),  341--379 (2003)

\bibitem{colledanchise2015}
Colledanchise, M., Marzinotto, A., Dimarogonas, D.V., {\"O}gren, P.: Adaptive
  fault tolerant execution of multi-robot missions using behavior trees. arXiv
  preprint arXiv:1502.02960  (2015)

\bibitem{colledanchise2014a}
Colledanchise, M., Marzinotto, A., Ogren, P.: Performance analysis of
  stochastic behavior trees. In: Robotics and Automation (ICRA), 2014 IEEE
  International Conference on. pp. 3265--3272. IEEE (2014)

\bibitem{colledanchise2014b}
Colledanchise, M., Ogren, P.: How behavior trees modularize robustness and
  safety in hybrid systems. In: Intelligent Robots and Systems (IROS 2014),
  2014 IEEE/RSJ International Conference on. pp. 1482--1488. IEEE (2014)

\bibitem{dietterich2000}
Dietterich, T.G.: Hierarchical reinforcement learning with the maxq value
  function decomposition. J. Artif. Intell. Res.(JAIR)  13,  227--303 (2000)

\bibitem{florez2009}
Fl{\'o}rez-Puga, G., G{\'o}mez-Mart{\'\i}n, M.A., G{\'o}mez-Mart{\'\i}n, P.P.,
  D{\'\i}az-Agudo, B., Gonz{\'a}lez-Calero, P.A.: Query-enabled behavior trees.
  Computational Intelligence and AI in Games, IEEE Transactions on  1(4),
  298--308 (2009)

\bibitem{isla2005}
Isla, D.: Gdc 2005 proceeding: Handling complexity in the halo 2 ai (March
  2005),
  \url{http://www.gamasutra.com/view/feature/130663/gdc_2005_proceeding_handling_.php},
  last access on 2015-03-01

\bibitem{johansson2012}
Johansson, A., Dell'Acqua, P.: Emotional behavior trees. In: Computational
  Intelligence and Games (CIG), 2012 IEEE Conference on. pp. 355--362. IEEE
  (2012)

\bibitem{marzinotto2014}
Marzinotto, A., Colledanchise, M., Smith, C., Ogren, P.: Towards a unified
  behavior trees framework for robot control. In: Robotics and Automation
  (ICRA), 2014 IEEE International Conference on. pp. 5420--5427. IEEE (2014)

\bibitem{ogren2012}
Ogren, P.: Increasing modularity of uav control systems using computer game
  behavior trees. In: AIAA Guidance, Navigation and Control Conference,
  Minneapolis, MN (2012)

\bibitem{palma2011}
Palma, R., Gonz{\'a}lez-Calero, P.A., G{\'o}mez-Mart{\'\i}n, M.A.,
  G{\'o}mez-Mart{\'\i}n, P.P.: Extending case-based planning with behavior
  trees. In: FLAIRS Conference (2011)

\bibitem{parr1998}
Parr, R., Russell, S.: Reinforcement learning with hierarchies of machines.
  Advances in neural information processing systems pp. 1043--1049 (1998)

\bibitem{pereira2014}
Pereira, R.d.P.: Behavior3 (2014), \url{http://behavior3.com/}, last access on
  2015-03-05

\bibitem{sutton1998}
Sutton, R.S., Barto, A.G.: Reinforcement learning: An introduction. MIT Press,
  Cambridge, MA, USA, 1st edn. (1998)

\bibitem{sutton1999}
Sutton, R.S., Precup, D., Singh, S.: Between mdps and semi-mdps: A framework
  for temporal abstraction in reinforcement learning. Artificial intelligence
  112(1),  181--211 (1999)

\end{thebibliography}

\end{document}